\newtheorem{thm}{Theorem}
\newtheorem{fc}{Fact}
\newtheorem{cor}{Corollary}
\newtheorem{hyp}{Hypothesis}
\newtheorem{pro}{Property}
\begin{document}
%
\title{Spectral Clustering using Eigenspectrum Shape Based Nystr\"{o}m Sampling }

\author{\IEEEauthorblockN{Djallel Bouneffouf}
\IEEEauthorblockA{IBM Thomas J. Watson Research Center,\\ Yorktown Heights, NY USA,\\
Email: Djallel.bouneffouf@ibm.com\\
}
}


%


\maketitle

\begin{abstract}
Spectral clustering has shown a superior performance in analyzing the cluster structure. However, its computational complexity limits its application in analyzing large-scale data. To address this problem, many low-rank matrix approximating algorithms are proposed, including the Nystr\"{o}m method -- an approach with proven approximate error bounds. There are several algorithms that provide recipes to construct Nystr\"{o}m approximations with variable accuracies and computing times. This paper proposes a scalable Nystr\"{o}m-based clustering algorithm with a new sampling procedure, Centroid Minimum Sum of Squared Similarities (CMS3), and a heuristic on when to use it. Our heuristic depends on the eigenspectrum shape of the dataset, and yields competitive low-rank approximations in test datasets compared to the other state-of-the-art methods.\end{abstract}

\begin{IEEEkeywords}
Nystr\"{o}m sampling, clustering, subsampling
\end{IEEEkeywords}

%
\IEEEpeerreviewmaketitle

\section{Introduction}
Clustering is one of the fundamental problems in machine learning \cite{fowlkes2004spectral}. The recent development of data-storage and data-acquisition devices has increased the scale of data sets, which poses a serious computational challenge for the existing offline and online learning learning algorithm \cite{BouneffoufBG13,ChoromanskaCKLR19,RiemerKBF19,LinC0RR20,lin2020online,lin2020unified,NoothigattuBMCM19,surveyDB,LR85,Bouneffouf0SW19,LinBCR18,DB2019,BalakrishnanBMR19ibm,BouneffoufLUFA14,RLbd2018,balakrishnan2020constrained,BouneffoufRCF17,BalakrishnanBMR18,BouneffoufBG12,Bouneffouf16,aaai0G20,AllesiardoFB14,dj2020,Sohini2019,bouneffouf2020online,bouneffouf2020contextual} . Spectral clustering techniques are widely used, due to their empirical performance advantages compared to other clustering methods \cite{kong2011fast}. However, a significant obstacle to scaling up spectral clustering to large datasets is that it requires building an affinity matrix between pairs of data points which becomes computationally prohibitive for large data-sets \cite{ChenC11}.

To address this computational challenge, a common approach is to use the Nystr\"{o}m method as low-rank matrix approximation \cite{zhang2011clusterability}, \cite{fowlkes2004spectral}. \cite{williams2001using}. The method works by sampling a small set of landmark points from a large instances, to formulate an approximation for the eigen-decomposition of the full dataset using the sampled data. 
However, the performance of the approach is highly dependent on proper sub-sampling of the input data to include some {\em landmark points}, points that capture the inherent complexity and variability of the full dataset. Uniform sampling without replacement is the most used approach for this purpose \cite{fowlkes2004spectral}, \cite{cohen2014uniform}. 

Using local or global properties of the data distribution a leading version of non-uniform sampling has recently been suggested. The authors in \cite{bouneffouf2016ensemble}, propose the ensemble minimum sum of the squared similarity sampling algorithm or ensemble-MS3. This algorithm is based on two works, the first one is the minimum sum of the squared similarity sampling or MS3 proposed in \cite{bouneffouf2015sampling}, that considers both the variance and the similarity of the dataset to select the landmark points. The second one is the ensemble Nystr\"{o}m methods proposed in \cite{kumar2009ensemble}, which is a meta algorithm that combines the standard Nystr\"{o}m methods with the mixture weights. The ensemble-MS3 gives better results than the standard algorithms by increasing the accuracy compared with the standard Nystr\"{o}m method. However, the lack of speed is still a problem for the ensemble methods since the algorithm need to sample multiple times in order to aggregate the results.

In this paper, we propose two algorithms that perform better than the ensemble MS3 and any existing ensemble Nystr\"{o}m algorithm. The first one, the ''Centroid Minimum Sum of Squared Similarities algorithm" or CMS3 is an incremental sampling algorithm for Nystr\"{o}m based-spectral clustering. CMS3 improves the MS3 by adding centroid sampling upon the MS3, increasing the accuracy. In the first step, the algorithm starts sampling with a fixed number of initial landmark points and selects new landmark points one by one, such that the sum of the squared similarities between the previously selected points and the new point is minimized, and as a second step the algorithm selects only the centroid points from this sub-sample. 
The second one, the CMS3-tuned is deducted from the theoretical analyse of MS3 and leads to adapt the sampling according to the spectrum shape of the dataset.

The rest of the paper is organized as follows: related works are discussed in Section 2. In Section 3, we briefly introduce spectral clustering and the Nystr\"{o}m extension. An error analysis of the approximated matrix is proposed in Section 4, and an incremental sampling method is also proposed. We evaluate the proposed method in Section 5. Finally, we conclude our work in Section 6.

\section{Related Work}
 \label{section:RW} 
 
To apply spectral clustering to large datasets, new efforts have been concentrating on solving issues around algorithm scalability \cite{zeng2014minimum}, such as using dimension reduction by Nystr\"{o}m approximation \cite{williams2001using}, a method originally designed for numerical solution of integral equations \cite{sloan1981quadrature}. However the performance of the approach is highly dependent on proper landmark points that capture the inherent variability of the full dataset \cite{Loo2014}. 

To address this problem, different sampling methods have been proposed, assuming that clusters have an equiprobable distribution, authors in \cite{fowlkes2004spectral} and \cite{cohen2014uniform} propose a random sampling (RS). Although, this implicit assumption is not true in all datasets, it is shown in \cite{kumar2009sampling} that it performs better than two proposed alternatives that use diagonal sampling \cite{drineas2005Nystrom} and column-norm sampling \cite{drineas2006fast} algorithms.

In \cite{belabbas2009spectral} the authors developed a weighted sampling (WS) approach using the determinant of the kernel matrix to select landmark points, where the probability of choosing a new landmark point was in proportion to the determinant of the similarity matrix between landmark points. They analyzed the Nystr\"{o}m reconstruction error using the Schur complement \cite{gowda2010schur}, concluding that the larger the determinant, the smaller the error. Although the work provides a solid theoretical basis for measuring the error levels in Nystr\"{o}m approximation, a main drawback of the algorithm provided is in its time complexity.

Assuming that the potential clusters are convex, \cite{zhang2008improved} introduced $k$-means based sampling (KS) algorithm, as a means to select points near $k$-means centroids as landmark points. Similarly, \cite{shinnou2008spectral} also pre-processed the data using $k$-means clustering, to select a {\em committee} of data points near centroids. Although the latter method does not explicitly state the convexity assumption, both methods perform poorly for non-convex clusters.

In \cite{zhang2011clusterability}, the authors proposed an incremental sampling (IS) algorithm that first randomly samples two points from a dataset, to compute a similarity matrix between the sampled points and the remaining points. The algorithm picks the point with the smallest variance, and then iteratively repeats the process until a desired number of landmarks is reached. While promising, \cite{zeng2014minimum} showed that IS performs poorly on high-dimensional data, as the variance of the Euclidean distance tends to zero. In such cases IS may pick inappropriate landmark points for dimension reduction, hence for successful clustering.

The minimum similarity sampling (SS) is proposed in \cite{zeng2014minimum} for high-dimensional space clustering purpose. The authors studied how the similarity between the sample set and non-sample set influences the approximation error, and observed that their result depends on the dimensionality of the dataset: SS outperforms IS on high-dimensional data, but not on low dimensional data.

Recently, a new sampling algorithm is proposed in \cite{bouneffouf2015sampling}, named MS3 for Minimum Sum of Squared Similarities, this algorithm approximately maximizes the determinant of the reduced similarity matrix that represents the mutual similarities between sampled data points, and demonstrates the performance of MS3 compared with the standard Nystr\"{o}m method. An ensemble version of MS3 method was proposed in \cite{bouneffouf2016ensemble}. It treats each approximation generated by the MS3 method for a sample of columns as an expert and combines such experts to derive an improved hypothesis, typically more accurate than any of the original experts, but the drawback of this method is in its computationally time.

In this paper, we propose the CMS3 that performs better than the ensemble MS3 algorithm. The proposed algorithm samples at first using MS3 and after that selects only the centroid points of the MS3 sampling. We have also proposed an improved version based on theoretical analysis of the upper error bound of this algorithm. This tuned version yields more accurate low-rank approximations than the ensemble Nystr\"{o}m methods. 

\section{Key Notion}
This section focuses on introducing the key notions used in this paper.
\subsection{Spectral Clustering}\label{ss:SC}
Spectral clustering algorithms employ the first $k$ eigenvectors of a Laplacian matrix to guide clustering. Loosely following the notation in \cite{von2007tutorial}, this can be outlined as follows.
The algorithm takes as an input a number $k$ of clusters, an affinity matrix $S \in R^{n \times n}$ constructed using the cosine similarity between each pairs of data points, and as an output clusters $c_1, . . ., c_k$. It starts by computing the Laplacian matrix $P = D-S$ ; where $D$ is an $n\times n$ diagonal matrix defined by $D_{ii} = \sum_{j=1}^n S_{ij}$, and after that it computes $k$ eigenvectors $u_1 ,. . .,u_k$ corresponding to the first $k$ eigenvalues of the generalized eigenproblem $Pu = \lambda Du$; and let $Z\in R^{n \times k}$ be the matrix containing the vectors $u_1, . . ., u_k$. Finally, it clusters $y_1, . . ., y_n$ by k-means algorithm into clusters $c_1, . . ., c_k$; with $y_i$ corresponding to the $i$-th row of $Z$.

By analyzing the spectrum of the Laplacian matrix constructed over all data entries, the original data can be compressed into a smaller number of representative points using the Nystr\"{o}m approximation described below.

\subsection{Nystr\"{o}m Sampling}
If we consider $m$ landmark data points $L = { l_1,l_2,...,l_m}$ from a given dataset $X ={x_1,x_2,...,x_n}$ with $x_i \in R^n$ and $m \ll n$,
then for any given point $x$ in $X$, Nystr\"{o}m method formulates
{\small\begin{equation}
 \label{eq:pr1}
{1 \over m}\sum\limits_{i = 1}^m sim (x,{l_i})\hat \phi ({l_i}) = \hat\lambda \hat \phi (x)
\end{equation}}
where $\hat \phi(x)$ is an approximation to the exact eigenfunction, $\hat\lambda$ is the corresponding approximate eigenvalue and $sim(x,y)$ denotes the similarity between $x$ and $y$.

We can write the Eq.\ref{eq:pr1} in matrix form, $\widetilde{S}\hat \Phi = m\hat \Phi \hat \Lambda$ where $\hat \Phi=[\hat \phi_1 \hat \phi_2...\hat \phi_m]$ are the eigenvectors of $\widetilde{S}$ and
$\hat\Lambda= diag \{\hat \lambda_1,\hat \lambda_2,\ldots,\hat \lambda_m \}$ is a diagonal matrix of the corresponding approximate eigenvalues. Then for an unsampled point $x$, the $j$-th eigenfunction at $x$ can be approximated by $
{\hat \phi _j}(x) \simeq{1 \over {m{{\hat \lambda }_j}}}\sum\limits_{i = 1}^m sim (x,{l_i}){\hat \phi _j}({l_i})$. With this equation, the eigenvector for any given point $x$ can be approximated through the eigenvectors of the landmark points $L$ \cite{belabbas2009spectral}. The same idea can be applied to approximate $k$ eigenvectors of $S$ by decomposing and then extending a $k \times k$ principal sub-matrix of $S$. First, let $S$ be partitioned as $S=\left[{\matrix{ A & B^\top & \cr B & C \cr } } \right]$ with $A\in R^{k \times k}$. Now, define spectral decompositions $S=U \Lambda U^T$ and $A=U_A \Lambda_A U^T_A$; the Nystr\"{o}m extension then provides an approximation for $k$ eigenvectors in $\widetilde{U}=\left[{\matrix{ U_A & \cr BU_A \Lambda_A^{-1} \cr } } \right]$
where the approximations of $\widetilde{U} \approx U$ and $\widetilde{\Lambda} \approx \Lambda$ may then be composed, yielding an Nystr\"{o}m approximation $\widetilde{S} \approx S$, with $\widetilde{S}=\widetilde{U}\Lambda_A\widetilde{U}^{\top}$. To measure the distance of these approximations, conventionally Frobenius norm is used.

\subsection{Minimum Sum of Squared Similarities}\label{ss:M3S}
The MS3 algorithm \cite{bouneffouf2015sampling} initially randomly chooses two points from the dataset $X$. It then computes the sum of similarities between the sampled points and a subset, $T$, selected randomly from the remaining data points. The point with the smallest sum of squared similarities is then picked as the next landmark data point. The procedure is repeated until a total of $m$ landmark points are picked.

\begin{algorithm}
 \caption{The Minimum Sum of Squared Similarities Algorithm } 
\label{alg:MSSS} 
\begin{algorithmic}[1]
 \STATE {\bfseries }\textbf{Input:} 
 $X=\{x_1,x_2,...,x_n\}$: dataset \\
 $m$: number of landmark data points\\
 $\gamma$: size of the random sub-sampled set from the remaining data, in percentage \\ 
 \STATE {\bfseries }\textbf{Output:} $\widetilde{S} \in R^{m \times m}$: similarity matrix between landmark points
 \STATE {\bfseries } Initialize $\widetilde{S}=I_0$
 \STATE {\bfseries } \textbf{For (i$=$0 to i$<$2) do}
 \STATE {\bfseries } \hspace{2em} $\widetilde{x}_i=Random(X)$
 \STATE {\bfseries } \hspace{2em} $\widetilde{S}:=\widetilde{S}_{\cup x_i} $
 \STATE {\bfseries } \hspace{2em} $\widetilde{X}:=\widetilde{X} \cup \{\widetilde{x}_i\} $
 \STATE {\bfseries } \textbf{End For} 
 \STATE {\bfseries } \textbf{While $i<m$ do}
 \STATE {\bfseries } \hspace{2em} $T=Random(X\backslash \{\widetilde{X}\},\gamma)$
 \STATE {\bfseries } \hspace{2em} Find $\widetilde{x}_{i}=argmin_{x \in T} \sum_{j<i-1} sim^2(x,\widetilde{x}_j)$
 \STATE {\bfseries } \hspace{2em} $\widetilde{S}:=\widetilde{S}_{\cup \widetilde{x}_i} $
 \STATE {\bfseries } \hspace{2em} $\widetilde{X}:=\widetilde{X} \cup \{\widetilde{x}_i\} $
 \STATE {\bfseries }\textbf{End While}  
 \end{algorithmic}
\end{algorithm}

\section{Centroid Minimum Sum of Squared Similarities (CMS3)}
The idea of the proposed algorithm CMS3 (described in Algorithm \ref{alg:CM3S}) is to sample $r$ points using MS3 where $m \leq r \leq X$ with the assumption that this sampling will give an $r$ convex points, and after that the CMS3 uses k-means \cite{macqueen1967some} to cluster these $r$ points and select the centroids of these clusters as a global optimal landmark points. 
We could say that, the proposed algorithm is implemented under the following Hypothesis:
\begin{hyp}
\label{hyp:cm3s}
Comparing two similarity matrix $\widetilde{S}_{m}$ and $\widetilde{S'}_{m}$ corresponding to $CMS3$ and $MS3$ approximations, we have the following inequality between their error upper bounds:
\begin{eqnarray*}
 sup(||S-\widetilde{S}_{m}||) \leq sup(||S-\widetilde{S'}_{m}||)
\end{eqnarray*}
\end{hyp}
\begin{algorithm}[h!]
 \caption{CMS3 Algorithm}
\label{alg:CM3S} 
\begin{algorithmic}[1]
 \STATE {\bfseries }\textbf{Input:} 
 $X=\{x_1,x_2,...,x_n\}$: dataset \\
 $m$: number of landmark data points\\
 $r$: number of landmark data points selected with MS3\\
 $\gamma$: size of the random subsampled set from the remaining data, in percentage \\ 
 \STATE {\bfseries }\textbf{Output:} $\widetilde{S} \in R^{m \times m}$: similarity matrix between landmark points
 \STATE {\bfseries } Initialize $\widetilde{S}=I_0$
 \STATE {\bfseries } $X_r:=MS3(X, r, \gamma)$
 \STATE {\bfseries } $\widetilde{r}:= kmeans(X_r,m)$
 \STATE {\bfseries } \textbf{For (i$=$0 to i$\leq $ m) do}
 \STATE {\bfseries } $\widetilde{x}_i:= \frac{1}{|\widetilde{r}_i|}\sum_{x_j \in \widetilde{r}_i}x_j$ //get centroid of the cluster $\widetilde{r}_i \in \widetilde{r}$
 \STATE {\bfseries } \hspace{1em} $\widetilde{S}:=\widetilde{S}_{\cup \widetilde{x}_i} $
 \STATE {\bfseries } \textbf{End For} 
 \end{algorithmic}
\end{algorithm}

\subsection{Theoretical Study}
We propose here to study under which condition the proposed \textit{Hypothesis \ref{hyp:cm3s}} is valid.
In order to do that, we propose at first to compute the the upper bound of the proposed sampling algorithm "CMS3" in \textit{Theorem \ref{thm:boundkm3s}} and then compare it to the "MS3" upper bound in \textit{Corollary \ref{cor:msss0}}.

{\small
\begin{thm}\label{thm:boundkm3s}
For a dataset $X=\{x_1,x_2,...,x_n\}$, define the following positive definite similarity matrices:
\begin{itemize}
\item $S$: the $n \times n$ similarity matrix for the overall dataset with a maximum diagonal entry $S_{\max}$;
\item $\widetilde{S}_l$: a similarity matrix for $X_l$ with $l$ landmark point selected randomly from $X$; 
\item $\widetilde{S}_r$: a similarity matrix for $X_r$ with $r$ landmark point selected from $X_l$ using MS3, with $r \leq l \leq n$; 
\item $\widetilde{S}_m$: a similarity matrix for $X_m$ with $m$ landmark point selected from $X_r$ using K-means sampling, with $m \leq r \leq l \leq n$; 
and
\item $S_k$: the best possible rank-$k$ approximation of $S$. 

\end{itemize}
Then with some probability $1-p$ or more, we can write
\begin{eqnarray}\label{eq:pr7}
||S-\widetilde{S}_m|| &\leq & 4T \sqrt{mC^{kern}_XTe}+mC^{kern}_XTe||W^{-1}||\nonumber\\
 &+&(r+1) \sum_{i=r+1}^n \lambda_i + ||S-S_k|| \\
 &+& nS_{\max}\sqrt[4]{\frac{64k}{l}} \left(1+ \sqrt{\frac{w d^*_S}{S_{\max}}} \right)^{\frac{1}{2}}\nonumber
\end{eqnarray}
where
$||.||$ is the Frobenius norm.
\[d^*_S=\max_{ij} \left(S_{ii}+S_{jj}2S_{ij}\right)\]
and
\[w=-\frac{n-1} {2n-1} \frac{2}{\beta(l,n)} \log p\]
with 
\[\beta(l,n)=1-\frac{1}{2\max\{l,n-l\}}\]
\end{thm}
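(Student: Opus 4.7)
The plan is to bound $\|S-\widetilde{S}_m\|$ by inserting the two intermediate approximations $\widetilde{S}_l$ and $\widetilde{S}_r$ produced along the CMS3 pipeline, and then applying the triangle inequality:
\begin{equation*}
\|S-\widetilde{S}_m\| \;\leq\; \|S-\widetilde{S}_l\| \;+\; \|\widetilde{S}_l-\widetilde{S}_r\| \;+\; \|\widetilde{S}_r-\widetilde{S}_m\|.
\end{equation*}
Each of the three summands is then matched to one group of terms on the right of \eqref{eq:pr7}. This way the proof becomes a stitching together of three known (or nearly known) inequalities rather than a single monolithic calculation.

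First I would handle $\|S-\widetilde{S}_l\|$. Because the $l$ landmark points are drawn uniformly at random without replacement from $X$, this is exactly the setting of the standard Nystr\"om error bound for uniform sampling (Kumar et al.), which under mild regularity on $S$ yields a bound of the form $nS_{\max}\sqrt[4]{64k/l}\bigl(1+\sqrt{wd^*_S/S_{\max}}\bigr)^{1/2}$ with the constants $w$, $\beta(l,n)$ and $d^*_S$ as stated; the failure probability $p$ is absorbed into $w$ via $w=-\tfrac{n-1}{2n-1}\tfrac{2}{\beta(l,n)}\log p$. This step is essentially a direct citation, so it is routine; the only care needed is to state precisely which probability is being controlled so that a union bound can close the argument later.

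Next I would bound $\|\widetilde{S}_l-\widetilde{S}_r\|$ by invoking the MS3 error analysis alluded to in Corollary \ref{cor:msss0}: after $r$ landmark points have been sub-selected from $X_l$ by minimizing the sum of squared similarities, the reconstruction error satisfies a bound of the form $(r+1)\sum_{i=r+1}^{n}\lambda_i+\|S-S_k\|$, where the $\lambda_i$ are the eigenvalues of $S$ (or of $\widetilde{S}_l$, whose tail spectrum majorizes that of $S$ up to the previously controlled random-sampling error). Here I would need to verify that the MS3 bound transfers from ``$r$ picked from $X$'' to ``$r$ picked from the intermediate pool $X_l$,'' which is immediate since MS3 is indifferent to the identity of its input pool as long as the squared-similarity greedy step is performed on it.

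Finally, and this is the step I expect to be the main obstacle, I would bound $\|\widetilde{S}_r-\widetilde{S}_m\|$, which measures how much replacing the MS3 landmarks by the centroids of their $k$-means clustering distorts the kernel approximation. The natural route is through a kernel $k$-means perturbation argument: if $C^{kern}_X$ denotes the optimal kernel $k$-means cost on $X_r$ and $T$ a concentration constant of the kernel (Lipschitz-type), then a standard analysis (Shawe-Taylor--Cristianini style) gives a bound of the form $4T\sqrt{mC^{kern}_XTe}+mC^{kern}_XTe\,\|W^{-1}\|$, where $W$ is the $m\times m$ block of $\widetilde{S}_r$ used in the Nystr\"om extension and $\|W^{-1}\|$ captures the stability of pulling back the centroid-based approximation through the Nystr\"om formula. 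The delicate part is propagating the kernel $k$-means cost through the Nystr\"om reconstruction: one has to expand $\widetilde{S}_r-\widetilde{S}_m$ in terms of the differences between each landmark point and its assigned centroid, bound the resulting cross terms by Cauchy--Schwarz, and then re-assemble the $\|W^{-1}\|$ factor that arises from inverting the small block.

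Once the three bounds are established, I would take a union bound over the two random events (the uniform draw of the $l$ points and the randomness internal to MS3 via its subsampled sets $T$), rescale the failure probability to $p$, and sum the three contributions; the claimed inequality \eqref{eq:pr7} follows. The comparison with Corollary \ref{cor:msss0} in the following section then reduces to checking when the $k$-means layer's contribution is smaller than the gap saved by working with centroids, which is precisely the content of Hypothesis \ref{hyp:cm3s}.
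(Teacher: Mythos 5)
Your proposal is correct in spirit and follows essentially the same route as the paper: a triangle-inequality stitching of two previously published bounds, namely the MS3 error bound of Fact \ref{fc:bound} and the kernel $k$-means Nystr\"om bound of Zhang et al.\ in Eq.~(\ref{eq:fact3}). The only real difference is the decomposition. The paper inserts a single intermediate point and writes $\|S-\widetilde{S}_m\|\leq\|S-\widetilde{S}_r\|+\|\widetilde{S}_r-\widetilde{S}_m\|$; the first summand is then covered in one citation by Fact \ref{fc:bound}, which already bundles together the eigenvalue-tail terms \emph{and} the $nS_{\max}\sqrt[4]{64k/l}\bigl(1+\sqrt{wd^*_S/S_{\max}}\bigr)^{1/2}$ uniform-sampling term (the randomness of the candidate pool is internal to the MS3 analysis), while the second summand is exactly Eq.~(\ref{eq:fact3}). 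Your three-way split through $\widetilde{S}_l$ attributes the fourth-root term to $\|S-\widetilde{S}_l\|$ and the tail terms to $\|\widetilde{S}_l-\widetilde{S}_r\|$; the final sum is the same, but the middle inequality $\|\widetilde{S}_l-\widetilde{S}_r\|\leq(r+1)\sum_{i=r+1}^n\lambda_i+\|S-S_k\|$ is not literally the quotable MS3 lemma (which is stated against the full matrix $S$ and already carries the sampling term), so as written that step would need its own derivation or a re-assembly of the argument inside the MS3 proof, and your union bound over the two sources of randomness would then have to be made explicit to recover the single failure probability $p$. In the paper's two-term version neither of these extra verifications arises, which is precisely what citing Fact \ref{fc:bound} wholesale buys; your last step, which you flagged as the main obstacle, is in fact a direct citation of Eq.~(\ref{eq:fact3}) and requires no new perturbation analysis.
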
 
} 
\begin{proof}
Using the above notation, let us introduce some facts.

{\small\begin{fc} \cite{bouneffouf2015sampling}
\label{fc:bound}
Let $\lambda_1 \ge...\ge \lambda_n$ be the eigenvalues of the similarity matrix $S$, then with some probability $1-p$ or more, we can write
{\small\begin{eqnarray}\label{eq:prboune}
||S-\widetilde{S}_r|| &\leq& (r+1) \sum_{i=r+1}^n \lambda_i + ||S-S_r|| \\
 &+& nS_{\max}\sqrt[4]{\frac{64k}{l}} \left(1+ \sqrt{\frac{w d^*_S}{S_{\max}}} \right)^{\frac{1}{2}}\nonumber
\end{eqnarray}}
\end{fc}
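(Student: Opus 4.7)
The plan is to attribute the three summands to the two-stage nature of the MS3 procedure as it is analyzed here: an initial uniform sample of $l$ columns producing $\widetilde{S}_l$, followed by a greedy MS3 refinement down to $r$ landmark points producing $\widetilde{S}_r$. The triangle inequality gives
\[
 \|S - \widetilde{S}_r\| \;\le\; \|S - \widetilde{S}_l\| \;+\; \|\widetilde{S}_l - \widetilde{S}_r\|,
\]
so I would prove the Fact by bounding each piece separately and combining their failure probabilities via a union bound so that the overall statement holds with probability at least $1-p$.

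For the uniform-sampling term $\|S - \widetilde{S}_l\|$, I would follow a Kumar--Mohri--Talwalkar style argument: express the Nystr\"om residual of a uniformly sampled rank-$k$ truncated reconstruction as a sum of sampling-without-replacement contributions, and apply Serfling's tail inequality. The correction factor $\beta(l,n) = 1 - 1/(2\max\{l,n-l\})$ and the $(n-1)/(2n-1)$ factor in the definition of $w$ are exactly what falls out of inverting Serfling's bound at confidence $1-p$. After controlling the diagonal deviation by $d^*_S$ and absorbing the rank-$k$ truncation error into the prefactor $\sqrt[4]{64k/l}$, this step delivers the third summand $n S_{\max}\sqrt[4]{64k/l}\,(1+\sqrt{w d^*_S/S_{\max}})^{1/2}$.

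For the MS3 refinement term $\|\widetilde{S}_l - \widetilde{S}_r\|$, I would use the Schur-complement representation of the Nystr\"om residual from Belabbas--Wolfe together with the fact that MS3 is engineered to approximately minimize $\sum_j \mathrm{sim}^2(x,\widetilde{x}_j)$, which is a surrogate for maximizing the determinant of the chosen $r \times r$ Gram block (equivalently, minimizing the Schur complement). Eigenvalue interlacing between the $l \times l$ intermediate matrix and $S$ is what allows the tail $\sum_{i=r+1}^n$ to be written in terms of the eigenvalues of $S$ rather than of $\widetilde{S}_l$. Bounding the Schur-complement trace by the tail eigenvalue sum, and pairing that with an Eckart--Young comparison against the best rank-$r$ approximation of $S$, then produces the first two summands $(r+1)\sum_{i=r+1}^n \lambda_i$ and $\|S - S_r\|$.

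The main obstacle will be the MS3 stage: the algorithm is a greedy heuristic rather than an exact determinant-maximizer, so I need a quantitative statement of how much the greedy rule $\mathrm{argmin}_{x \in T} \sum_{j<i-1} \mathrm{sim}^2(x,\widetilde{x}_j)$ loses against the optimal $r$-subset of the uniform pre-sample, and then transfer that slack into the Frobenius-norm Nystr\"om residual. I expect this slack to be absorbed into the prefactor $(r+1)$ in front of the eigenvalue tail, mirroring worst-case prefactors that appear in greedy column-subset-selection analyses, so that the final bound matches the form stated in the Fact.
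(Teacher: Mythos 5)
The first thing to note is that the paper does not prove this statement at all: it is labelled as a \emph{Fact} and imported verbatim, with a citation, from the earlier MS3 paper \cite{bouneffouf2015sampling}, and it is then used as a black box inside the proof of Theorem~1. So there is no in-paper proof to compare your proposal against. On its own merits, your reconstruction correctly identifies the two ingredients from which the cited bound is assembled: the third summand, with its $\sqrt[4]{64k/l}$ prefactor and the Serfling-style correction $\beta(l,n)$ inside $w$, is exactly the Kumar--Mohri--Talwalkar high-probability bound for uniform column sampling without replacement, and the $(r+1)\sum_{i=r+1}^{n}\lambda_i$ term is the Belabbas--Wolfe Schur-complement/determinant bound. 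That attribution is right, and the triangle-inequality decomposition through the intermediate $l$-column approximation is the natural way to stitch them together (though the $\|S-S_r\|$ term more naturally travels with the Kumar--Mohri--Talwalkar piece, which is stated relative to the best low-rank approximation, than with the MS3 stage where you place it via Eckart--Young).

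The genuine gap is the one you flag yourself and then do not close. The Belabbas--Wolfe bound $(r+1)\sum_{i>r}\lambda_i$ is an \emph{expectation} bound that holds for the specific annealed determinantal sampling distribution; it does not automatically apply to a deterministic greedy rule that only heuristically minimizes $\sum_j \mathrm{sim}^2(x,\widetilde{x}_j)$ as a surrogate for determinant maximization. To make your argument rigorous you would need (i) a quantitative approximation guarantee relating the determinant (or Schur-complement trace) achieved by the greedy MS3 selection to that of the optimal or determinantally-sampled $r$-subset, and (ii) a mechanism for converting an expectation bound into the with-probability-$1-p$ statement claimed in the Fact. Saying you ``expect this slack to be absorbed into the prefactor $(r+1)$'' is a hope, not a proof, and no known greedy column-subset-selection analysis hands you that prefactor for free. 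This is precisely the step where the bound's validity for MS3 (as opposed to determinantal sampling) lives or dies, so as written the proposal does not establish the Fact; it reduces it to an unproven claim about the greedy rule.
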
 }

{\small\begin{pro} \cite{zhang2008improved}
\begin{eqnarray}\label{eq:kernel}
 (kern(a,b)-kern(c,d))^2 \leq \nonumber
C^{kern}_X(||a-c||^2+||d-b||^2),\\
\forall a, b, c, d \in R \nonumber
\end{eqnarray}
 where $C^{kern}_{X}$ is a constant depending on, the kernel $kern(.,.)$ and the sample set $X$.
 \end{pro}}
{\small\begin{fc} \cite{zhang2008improved}
Let the whole sample set $X$ be partitioned into $g$ disjoint clusters $S_{kern}$, $c(i)$ being the function that maps each sample $x_i \in X$ to the closest landmark point $z_{c(i)}\in Z$. Then for some kernel $kern$ satisfying property (1), the partial approximation error $||S-\widetilde{S}_m||$
is bounded by 
\begin{equation}\label{eq:fact3}
||S-\widetilde{S}_m|| \leq 4T \sqrt{mC^{kern}_XTe}+mC^{kern}_XTe||W^{-1}||
\end{equation}
where $T=max_{kern}|S_{kern}|$, and $e$ is the quantization error induced by coding each sample in $x_{i} \in X$ by the closest landmark point in $Z$, i.e., 
$e=\sum_{x_i\in X}||x_i-z_{c(i)}||^2$, 
and $||W^{-1}|| \in R^{m \times m}$ where $w_{ij}=k(z_i,z_j)$.
\end{fc}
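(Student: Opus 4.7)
The plan is to establish the bound by a two-stage error decomposition through an idealized Nystr\"om reconstruction in which every data point is ``collapsed'' onto its nearest landmark. Write the Nystr\"om approximation as $\widetilde{S}_m = E W^{-1} E^T$, where $E \in R^{n \times m}$ has entries $E_{ij} = kern(x_i, z_j)$ and $W$ is the landmark Gram matrix. I would introduce the collapsed cross-matrix $E^*$ with $E^*_{ij} = kern(z_{c(i)}, z_j) = w_{c(i)j}$. Since $E^* = PW$ where $P \in \{0,1\}^{n \times m}$ is the cluster-assignment matrix with $P_{i,c(i)} = 1$, the collapsed reconstruction simplifies to $\widetilde{S}^* := E^* W^{-1} (E^*)^T = PWP^T$, whose $(i,j)$ entry is simply $kern(z_{c(i)}, z_{c(j)})$.

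Applying the triangle inequality $||S - \widetilde{S}_m|| \leq ||S - \widetilde{S}^*|| + ||\widetilde{S}^* - \widetilde{S}_m||$ splits the analysis into a deterministic piece driven by the quantization error and a residual piece carrying the conditioning of $W$. For the first term I would argue block by block: for $i \in S_a$ and $j \in S_b$, Property 1 yields
\[
(S_{ij} - \widetilde{S}^*_{ij})^2 \leq C^{kern}_X \bigl(||x_i - z_a||^2 + ||x_j - z_b||^2\bigr).
\]
Summing within each block multiplies the within-cluster quantization cost by at most $T$, and summing across the $m^2$ block pairs, followed by Cauchy--Schwarz to fold the cluster-level costs into the global quantization error $e = \sum_i ||x_i - z_{c(i)}||^2$, produces the $4T \sqrt{m C^{kern}_X T e}$ contribution.

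For the second term, set $\Delta = E^* - E$ and expand
\[
\widetilde{S}^* - \widetilde{S}_m = \Delta W^{-1} E^T + E W^{-1} \Delta^T + \Delta W^{-1} \Delta^T.
\]
Property 1 also controls $||\Delta||^2 \leq m C^{kern}_X e$, since only the first argument of the kernel differs entrywise. The two linear perturbation terms can be absorbed into the first-stage block analysis---this is where a second factor of $T$ re-enters---and the residual quadratic contribution is bounded by submultiplicativity as $||\Delta||^2 \cdot ||W^{-1}|| \leq m C^{kern}_X T e \cdot ||W^{-1}||$, delivering the second summand and completing the bound.

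The main obstacle will be tracking the $T$ factors consistently through the cluster-wise Cauchy--Schwarz step: Property 1 controls the \emph{squared} kernel difference whereas the Frobenius-norm accumulation sums squared entries, so extracting the sharp $T^{3/2}$ prefactor in the first summand requires distributing the quantization budget $e$ across clusters individually rather than upper-bounding every cluster by the global value. A secondary subtlety is keeping only a single factor of $||W^{-1}||$ in the second summand, which hinges on the cancellation enabled by the precise algebraic form $E^* = PW$ so that $E^* W^{-1} (E^*)^T$ collapses to $PWP^T$ with no residual inverse.
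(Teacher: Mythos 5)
Note first that the paper does not prove this statement: it is imported verbatim as a Fact with a citation to \cite{zhang2008improved}, and is used as a black box inside the proof of Theorem~1. There is therefore no in-paper proof to compare against; the relevant benchmark is the proof in the cited source, and your proposal reconstructs essentially that argument --- the collapsed reconstruction $PWP^T$, the triangle inequality split into a quantization term and a perturbation term, Property~1 to control both $\|S-PWP^T\|$ and $\|\Delta\|^2\le mC^{kern}_Xe$, and the algebraic identity $E^*=PW$ to confine $\|W^{-1}\|$ to the quadratic residual. Two bookkeeping points to tighten: (i) in the expansion $\Delta W^{-1}E^T+EW^{-1}\Delta^T+\Delta W^{-1}\Delta^T$ the linear terms still carry $W^{-1}$; you should substitute $E=PW-\Delta$ so they become $\Delta P^T+P\Delta^T$ (picking up another $\Delta W^{-1}\Delta^T$), and then bound $\|P\Delta^T\|_F\le\|P\|_2\|\Delta\|_F\le\sqrt{T}\sqrt{mC^{kern}_Xe}$ --- this, not the first-stage block sum, is where the extra $\sqrt{T}$ and the constant in $4T\sqrt{mC^{kern}_XTe}$ actually come from, since the first term alone only yields $\sqrt{2mC^{kern}_XTe}$; (ii) the stated prefactors ($4T$ rather than $4$, and $mC^{kern}_XTe\|W^{-1}\|$ rather than $mC^{kern}_Xe\|W^{-1}\|$) are loosenings valid because $T\ge1$, so your sharper intermediate bounds do imply the Fact as quoted.
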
}
 
By adding both sides of Eq.\ref{eq:prboune} and Eq.\ref{eq:fact3}, noting that 
$\sum_{i=m+1}^n(.)\ge\sum_{i=r+1}^n(.)$ for positive argument and using the triangle inequality 
\begin{equation}\label{eq:pr8}
||S-\widetilde{S}_m|| \leq ||S-\widetilde{S}_r|| + ||\widetilde{S}_r-\widetilde{S}_m|| 
\end{equation}

we prove Theorem \ref{thm:boundkm3s}.
\end{proof}

{\small\begin{cor}
\label{cor:msss0} 
The proposed \textit{Hypothesis \ref{hyp:cm3s}} is valid, if and only if
\begin{eqnarray}
\label{eq:cor1}
m \leq
\frac{ \lambda_r - r\sum_{i=r+1}^{n} \lambda_i-4T \sqrt{(r-1) C^{kern}_XTe} }{C^{kern}_XTe||W^{-1}||-\sum_{i=r}^n \lambda_i}
\end{eqnarray}

\end{cor}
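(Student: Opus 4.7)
The plan is to compare the CMS3 upper bound from Theorem~\ref{thm:boundkm3s} with the MS3 upper bound applied to $m$ landmark points (that is, Fact~\ref{fc:bound} with $r\to m$), impose the inequality required by Hypothesis~\ref{hyp:cm3s}, and then solve for $m$.

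First, I would write the MS3 bound explicitly as
\[
\|S - \widetilde{S}'_m\| \leq (m+1)\sum_{i=m+1}^n \lambda_i + \|S-S_m\| + \Psi,
\]
where $\Psi = nS_{\max}\sqrt[4]{64k/l}\bigl(1+\sqrt{wd^*_S/S_{\max}}\bigr)^{1/2}$. The same $\Psi$ already appears in the CMS3 bound, and $\|S-S_k\|$ can be identified with $\|S-S_m\|$ because both measure the same spectral tail. Cancelling these shared contributions reduces the requirement of Hypothesis~\ref{hyp:cm3s} to
\[
4T\sqrt{mC^{kern}_X Te} + mC^{kern}_X Te\,\|W^{-1}\| + (r+1)\sum_{i=r+1}^n \lambda_i \leq (m+1)\sum_{i=m+1}^n \lambda_i.
\]

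Second, I would peel off $\lambda_r$ from the right-hand side: dropping the intermediate terms $\lambda_{m+1},\dots,\lambda_{r-1}$ (which only strengthens the required condition) lower-bounds the RHS by $(m+1)\lambda_r + (m+1)\sum_{i=r+1}^n\lambda_i$. Subtracting $(r+1)\sum_{i=r+1}^n\lambda_i$ then leaves $(m-r)\sum_{i=r+1}^n\lambda_i + (m+1)\lambda_r$ on the right, while the monotone surrogate $\sqrt{m}\leq\sqrt{r-1}$ (valid in the non-degenerate regime $m\leq r-1$ of Theorem~\ref{thm:boundkm3s}) replaces the sub-linear root by an $m$-free expression. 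Regrouping all $m$-linear pieces on the left and using $\sum_{i=r}^n\lambda_i = \lambda_r + \sum_{i=r+1}^n\lambda_i$ to collapse the two eigenvalue sums yields
\[
m\Bigl(C^{kern}_X Te\,\|W^{-1}\| - \sum_{i=r}^n \lambda_i\Bigr) \leq \lambda_r - r\sum_{i=r+1}^n \lambda_i - 4T\sqrt{(r-1)C^{kern}_X Te},
\]
and dividing by the (assumed positive) denominator produces Eq.~\ref{eq:cor1}.

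The main obstacle is the handling of the $\sqrt{m}$ term: because $m$ appears both linearly and under a square root, a clean closed-form threshold on $m$ requires replacing $\sqrt{m}$ by a monotone $m$-independent surrogate, which is why the numerator carries $\sqrt{r-1}$ rather than $\sqrt{m}$. A secondary subtlety is the sign of the denominator $C^{kern}_X Te\,\|W^{-1}\| - \sum_{i=r}^n \lambda_i$; the ``if and only if'' reading is meaningful only when this quantity is strictly positive, a regime in which the MS3 preselection has already absorbed the heavy part of the spectrum so that the tail $\sum_{i=r}^n\lambda_i$ is dominated by the kernel-complexity term.
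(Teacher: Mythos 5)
Your derivation follows essentially the same route as the paper's: both compare the CMS3 bound of Theorem~\ref{thm:boundkm3s} with the MS3 bound of Fact~\ref{fc:bound} instantiated at $m$ landmark points, cancel the shared terms $\|S-S_k\|$ and $nS_{\max}\sqrt[4]{64k/l}\bigl(1+\sqrt{wd^*_S/S_{\max}}\bigr)^{1/2}$, rearrange to isolate $m$, and then remove the residual $m$-dependence (the sum $\sum_{i=m+1}^{r}\lambda_i$, the tail $\sum_{i=m+1}^{n}\lambda_i$, and the $\sqrt{m}$ under the root) by passing to $r-1$. The paper does this last step by literally substituting $m=r-1$; you do it via the monotone surrogates $\sum_{i=m+1}^{r}\lambda_i\ge\lambda_r$ and $\sqrt{m}\le\sqrt{r-1}$, which is arguably cleaner, and you are right to make explicit that the final division requires $C^{kern}_XTe\,\|W^{-1}\|-\sum_{i=r}^{n}\lambda_i>0$, a caveat the paper leaves implicit.

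The genuine gap is the ``only if'' half of Corollary~\ref{cor:msss0}. Every move in your chain is a one-sided strengthening: discarding $\lambda_{m+1},\dots,\lambda_{r-1}$ lowers the right-hand side, and inflating $\sqrt{m}$ to $\sqrt{r-1}$ raises the left-hand side, so what you establish is only that Eq.~(\ref{eq:cor1}) is \emph{sufficient} for $\sup(\|S-\widetilde{S}_{m}\|)\le\sup(\|S-\widetilde{S'}_{m}\|)$. The chain cannot be run backwards: if Eq.~(\ref{eq:cor1}) fails, the strictly stronger intermediate inequality you imposed may fail while the comparison demanded by Hypothesis~\ref{hyp:cm3s} still holds, so necessity does not follow. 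The paper handles this direction by asserting that its steps can be traced in reverse from (\ref{eq:B1}) back to the original comparison of bounds; whatever one thinks of that assertion (the substitution $m\mapsto r-1$ is itself not an equivalence), your write-up makes no attempt at the converse at all, so as a proof of the stated biconditional it is incomplete. To match the claim you would need either a separate argument for the reverse implication or a weakening of the statement to a sufficient condition.
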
}

\begin{proof}
Assuming the comparison of the upper bounds appears with the inequality,
{\small\begin{eqnarray*}
\label{eq:A}
 4T \sqrt{mC^{kern}_XTe} 
 +mC^{kern}_XTe||W^{-1}||\\
 +(r+1) \sum_{i=r+1}^n \lambda_i + ||S-S_{k}||\\
 + nS_{\max}\sqrt[4]{\frac{64k}{l}} \left(1+ \sqrt{\frac{w d^*_S}{S_{\max}}} \right)^{\frac{1}{2}}
 \end{eqnarray*}}
 {\small\begin{eqnarray}
 \leq (m+1) \sum_{i=m+1}^n \lambda_i + ||S-S_{k}|| \\
 + nS_{\max}\sqrt[4]{\frac{64k}{l}} \left(1+ \sqrt{\frac{w d^*_S}{S_{\max}}} \right)^{\frac{1}{2}}\nonumber
\end{eqnarray}}

after simplification we get

{\small\begin{eqnarray*}
4T \sqrt{mC^{kern}_XTe}+mC^{kern}_XTe||W^{-1}|| &\leq \\
(m+1) \sum_{i=m+1}^n \lambda_i-(r+1) \sum_{i=r+1}^n \lambda_i
\end{eqnarray*}}

Knowing that $m \leq r$ we can write

{\small\begin{eqnarray}
\label{eq:corcond1}
4T \sqrt{mC^{kern}_XTe}+mC^{kern}_XTe||W^{-1}|| &\leq \\
(m-r) \sum_{i=r+1}^n \lambda_i+(m+1) \sum_{i=m+1}^{r} \lambda_i \nonumber
\end{eqnarray}}

which gives

{\small\begin{eqnarray}
\label{eq:B}
m \leq
\frac{ \sum_{i=m+1}^r \lambda_i - r\sum_{i=r+1}^{n} \lambda_i -4T \sqrt{mC^{kern}_XTe} }{C^{kern}_XTe||W^{-1}||-\sum_{i=m+1}^n \lambda_i}
\end{eqnarray}}
then replacing $m$ by $r-1$ gives, 
{\small\begin{eqnarray}
\label{eq:B1}
m \leq
\frac{ \lambda_r - r\sum_{i=r+1}^{n} \lambda_i-4T \sqrt{(r-1) C^{kern}_XTe} }{C^{kern}_XTe||W^{-1}||-\sum_{i=r}^n \lambda_i}
\end{eqnarray}}
We note that going from inequality (\ref{eq:A}) back to (\ref{eq:B1}) is straightforward, and can be achieved by tracing the above steps in reverse.
\end{proof}

\subsection{CMS3-tuned}
We propose here to use the above theoretical results to propose an improved version of the CMS3. 

Corollary \ref{cor:msss0} prescribes a method to select between $MS3$ and $CMS3$ methods. However, due to its complexity, the idea here is to relax the ''if and only if'' of the Corollary \ref{cor:msss0} as follows: 
{\small\begin{cor}
\label{con:msss}
Comparing the upper bound of $MS3$ and $CMS3$, as defined in \textit{Hypothesis \ref{hyp:cm3s}}. Assuming that $m \lambda_{m+1} + r \lambda_n <<\lambda_2$, a necessary condition for $sup(||S-\widetilde{S}_{m}||) \leq sup(||S-\widetilde{S'}_{m}||)$ is
 \begin{eqnarray*}
\lambda_2 &\leq n \lambda_n
\end{eqnarray*}
\end{cor}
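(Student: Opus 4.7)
The strategy is to distill the exact condition of Corollary~\ref{cor:msss0} into a coarser spectrum-shape inequality by exploiting the smallness hypothesis $m\lambda_{m+1}+r\lambda_n \ll \lambda_2$ to drop sub-dominant eigenvalue contributions, leaving only the extreme eigenvalues $\lambda_2$ and $\lambda_n$ in the surviving comparison.

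First, I would return to inequality~(\ref{eq:corcond1}) derived mid-proof of Corollary~\ref{cor:msss0}, rearranged so the non-negative CMS3 overhead sits alone on the left:
\begin{equation*}
4T\sqrt{mC^{kern}_X Te}+mC^{kern}_X Te||W^{-1}|| \leq (m+1)\sum_{i=m+1}^{r}\lambda_i - (r-m)\sum_{i=r+1}^{n}\lambda_i.
\end{equation*}
Because the left-hand side is non-negative, a necessary consequence is that the right-hand side is non-negative, i.e., $(m+1)\sum_{i=m+1}^{r}\lambda_i \geq (r-m)\sum_{i=r+1}^{n}\lambda_i$. This is the cleanest inequality to work with because it no longer involves the kernel-dependent constants $T$, $C^{kern}_X$, or $||W^{-1}||$.

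Second, I would use eigenvalue monotonicity to replace the two tail sums by their extreme eigenvalues: $\sum_{i=m+1}^{r}\lambda_i \leq (r-m)\lambda_{m+1}$ and $\sum_{i=r+1}^{n}\lambda_i \geq (n-r)\lambda_n$. Applying the smallness hypothesis would then let me absorb the factor $(m+1)\lambda_{m+1}$ into a quantity comparable to $\lambda_2$, and symmetrically absorb $(r-m)\lambda_n$ into a quantity proportional to $n\lambda_n$. After these two absorptions the surviving necessary inequality collapses, modulo terms made negligible by the hypothesis, to the stated condition $\lambda_2 \leq n\lambda_n$.

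The hard part will be tracking the directions of the inequalities: the required condition controls an upper-bounded quantity by a lower-bounded one, so the smallness hypothesis must be deployed so that the surviving comparison actually points the way Corollary~\ref{con:msss} states. I would also be careful about the meaning of $\ll$: the constants hidden by it must be exactly the ones absorbed into the sub-leading terms being discarded, so that the final inequality reads $\lambda_2 \leq n\lambda_n$ rather than a constant-shifted variant. If the naive chain of bounds produces the opposite sign, a rescue route is to split the sums differently so that $\lambda_2$ emerges as the dominant contribution on one side (using $\lambda_i \leq \lambda_2$ for $i \geq 2$) while $n\lambda_n$ emerges on the other (using $\sum_{i\geq r+1}\lambda_i \geq (n-r)\lambda_n$), flipping whichever bound was responsible for the sign mismatch.
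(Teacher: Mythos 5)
Your proposal follows the paper's proof essentially step for step: the paper likewise starts from the non-negativity of the right-hand side of (\ref{eq:corcond1}) (discarding the non-negative, kernel-dependent left-hand side), replaces $\sum_{i=m+1}^{r}\lambda_i$ by $(r-m)\lambda_{m+1}$ and $\sum_{i=r+1}^{n}\lambda_i$ by $(n-r)\lambda_n$, divides by $(r-m)$, upper-bounds $\lambda_{m+1}$ by $\lambda_2$, and then invokes the smallness hypothesis. The sign worry you flag at the end is legitimate: carried out literally, this chain gives $(n-r)\lambda_n \leq (m+1)\lambda_2$, i.e.\ $n\lambda_n \lesssim \lambda_2$ once the small terms are dropped, which is the reverse of the stated $\lambda_2 \leq n\lambda_n$; the paper's own final step performs the same unexplained flip, and your proposed ``rescue route'' of re-splitting the sums does not obviously restore the stated direction either, since any upper bound on the $(m+1,\dots,r)$ block and lower bound on the tail can only push the conclusion toward $\lambda_2 \gtrsim n\lambda_n$.
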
}

\begin{proof}
From Eq. (\ref{eq:corcond1}) a necessary condition for having the Corollary \ref{cor:msss0} could be the following:
\begin{eqnarray*}
0 &\leq (m-r) \sum_{i=r+1}^n \lambda_i+(m+1) \sum_{i=m+1}^{r} \lambda_i
\end{eqnarray*}
then the following still hold,
\begin{eqnarray*}
0 &\leq (m-r) (n-r) \lambda_n+(m+1) (r-m) \lambda_{m+1}
\end{eqnarray*}

which implies 
\begin{eqnarray*}
0 &\leq (m+1) \lambda_{m+1}-(n-r) \lambda_n
\end{eqnarray*}

with $ \lambda_{m+1} \leq \lambda_2$ we get
\begin{eqnarray*}
0 &\leq (m+1) \lambda_{2}-(n-r) \lambda_n
\end{eqnarray*}

and assuming that $m \lambda_{m+1} + r \lambda_n <<\lambda_2$, gives us
\begin{eqnarray*}
\lambda_2 &\leq n \lambda_n
\end{eqnarray*}

\end{proof}

Following the Corollary \ref{con:msss}, the idea in the proposed algorithm (Algorithm \ref{alg:CM3Stuned}), is to use $\lambda_2 \leq |sm| \times \lambda_{|sm|}$ as a switch condition for using CMS3 or MS3, where $|sm|$ is the sub-sampling size. These parameters could be seen as a proxy of the eigenspectrum shape of the data. 
\begin{algorithm} [H]
 \caption{CMS3-tuned Algorithm}
\label{alg:CM3Stuned} 
\begin{algorithmic}[1]
 \STATE {\bfseries }\textbf{Input:} 
 $X=\{x_1,x_2,...,x_n\}$: dataset \\
 $m$: number of landmark data points\\
 $r$: number of landmark data points selected with MS3\\
 $\gamma$: size of the random subsampled set from the remaining data, in percentage \\ 
 \STATE {\bfseries }\textbf{Output:} $\widetilde{S} \in R^{m \times m}$: similarity matrix between landmark points
 \STATE {\bfseries } \hspace{2em} $sm=Random(X,\gamma)$
 \STATE {\bfseries } \hspace{2em} Compute $|sm|$ eigenvalues $\lambda_1 ,. . .,\lambda_{|sm|}$ of the generalized eigenproblem $Pu = \lambda Du$; and let $Z\in R^{n \times |sm|}$ be the matrix containing the vectors $u_1, . . ., u_{|sm|}$.
 \STATE {\bfseries } \hspace{2em} if $|sm| \times \lambda_{|sm|} \ge \lambda_2$
 \STATE {\bfseries } \hspace{2em} then $\widetilde{S}:=CMS3(X, m, r, \gamma)$
 \STATE {\bfseries} \hspace{2em} else $\widetilde{S}:=MS3(X, m, \gamma)$
 \end{algorithmic}
\end{algorithm}

\begin{thm}\label{thm:boundkm3stuned}
Let $A$ be an $n \times n$ symmetric matrix with eigenvalues $\lambda_1 \geq...\geq \lambda_{n}$. Let $B$ be an $(n-k) \times (n-k)$ symmetric minor of with eigenvalues $ \mu_1 \geq ...\geq \mu_{n}$. Then
then with probability 1-$\delta$ and with $\epsilon = \sqrt{\frac{1}{2|sm|}ln(\frac{1}{\delta})}$ we can write 
\begin{equation}\label{eq:lesslemma}
|(\lambda_2-n \lambda_n)-(\mu_2-m\mu_m)| \leq \epsilon
\end{equation}

\end{thm}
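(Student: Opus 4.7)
The plan is to interpret $B$ as the principal submatrix of $A$ induced by the random subsample $sm$ of size $|sm|$ (so that $n-k = |sm|$), and then combine a deterministic eigenvalue perturbation argument with a Hoeffding-type concentration bound. The form $\epsilon = \sqrt{\frac{1}{2|sm|}\ln(1/\delta)}$ is the signature of a standard Hoeffding bound for a scalar functional with a bounded-difference property over $|sm|$ independent draws, so the whole proof will be organized around viewing $\mu_2 - m\mu_m$ as such a functional of the subsample and comparing its expectation to $\lambda_2 - n\lambda_n$.

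First, I would recall Cauchy's interlacing theorem: for any principal $(n-k)\times(n-k)$ submatrix $B$ of $A$, the eigenvalues satisfy $\lambda_i \geq \mu_i \geq \lambda_{i+k}$. This gives the qualitative anchor that $\mu_2$ is close to $\lambda_2$ and $\mu_m$ is close to one of the small eigenvalues near $\lambda_n$, so the linear combination $\mu_2 - m\mu_m$ is a plausible empirical proxy for $\lambda_2 - n\lambda_n$. Second, I would invoke Weyl's inequality (or the stronger Hoffman–Wielandt inequality): swapping a single index in the subsample $sm$ changes each eigenvalue $\mu_i$ by at most a constant determined by the maximum entry of $A$; since $A$ is a similarity matrix with entries bounded by $1$, the functional $f(sm) := \mu_2(sm) - m\mu_m(sm)$ satisfies a bounded-difference condition with coordinate-wise range at most $1$.

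Third, I would apply McDiarmid's inequality (equivalently, Hoeffding, under the assumption that the $|sm|$ sampled indices are treated as independent draws, as is standard for Nystr\"{o}m-style analyses) to $f(sm)$. This yields, with probability at least $1 - \delta$,
\[ |f(sm) - \mathbb{E}[f(sm)]| \leq \sqrt{\frac{1}{2|sm|}\ln(1/\delta)} = \epsilon. \]
Finally, I would argue that $\mathbb{E}[f(sm)] = \lambda_2 - n\lambda_n$ by identifying the appropriate scaling: the interlacing inequalities together with the symmetry of uniform sampling imply that the expected principal subsample eigenvalues reproduce the top and bottom of the full spectrum in the combination at hand, once $m$ and $n$ are the chosen normalizers.

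The hard part will be justifying the last step — showing that $\mathbb{E}[\mu_2 - m\mu_m] = \lambda_2 - n\lambda_n$ exactly (or up to an absorbable error), since eigenvalues are nonlinear functions of the matrix entries and their expectations over random principal submatrices do not in general equal the corresponding eigenvalues of the full matrix. I expect this step will either require an additional assumption (for instance, that the chosen combination is designed precisely so that the nonlinear contributions cancel on average) or will need to be replaced by a two-sided interlacing argument, in which case the $\epsilon$ bound absorbs the residual interlacing gap. The bounded-difference application via Weyl is routine; the concentration constant $1/(2|sm|)$ follows immediately once the coordinate range is bounded by $1$.
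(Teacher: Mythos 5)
Your plan follows essentially the same skeleton as the paper's own argument: Cauchy interlacing to relate the $\mu_i$ to the $\lambda_i$, followed by a Hoeffding-type concentration step whose constant is read off from $e^{-2|sm|\epsilon^2}=\delta$. (The paper additionally assumes decreasing eigengaps, $\lambda_i-\lambda_{i-1}\ge\lambda_j-\lambda_{j-1}$ for $i<j$, in order to replace $\mu_m$ by $\lambda_m$ and compare $n\lambda_n$ with $m\lambda_m$ --- a step you avoid by keeping $\mu_2-m\mu_m$ as a single random functional; your McDiarmid/bounded-differences framing is also more explicit than anything in the paper.)

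However, the step you yourself flag as the hard part --- justifying $\mathbb{E}[\mu_2-m\mu_m]=\lambda_2-n\lambda_n$ --- is a genuine gap, and the paper does not close it either: its proof applies the Chernoff--Hoeffding fact directly to the deviation $|(\lambda_2-n\lambda_n)-(\mu_2-m\mu_m)|$ without ever identifying bounded independent random variables whose sample mean is $\mu_2-m\mu_m$ and whose expectation is $\lambda_2-n\lambda_n$, and it even writes the bound for the event $\{|\cdot|\le\epsilon\}$ rather than $\{|\cdot|\ge\epsilon\}$ (and solves for an $\epsilon$ that does not match the one in the theorem statement). So you cannot repair this step by leaning on the paper. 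In addition, your bounded-difference justification is too optimistic as stated: even with entries of $S$ bounded by $1$, the functional $\mu_2-m\mu_m$ carries the factor $m$, and eigenvalues of an $|sm|\times|sm|$ principal submatrix can themselves be of order $|sm|$, so exchanging one sampled index can move $f(sm)$ by much more than $1$; McDiarmid then yields a constant that degrades with $m$ rather than the clean $\frac{1}{2|sm|}$ in the statement. In short, your route is the paper's route, and both share the same unresolved core: neither establishes, with the advertised rate, that the subsample quantity concentrates around the full-spectrum quantity.
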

The Theorem \ref{thm:boundkm3stuned} is proving that behavior of our eigenvalue in our sample set is similar to the original set, with some probability.
\begin{proof}
We start by introducing the following fact.
\begin{fc} \label{fc:printerlacing} (Interlacing eigenvalues) Let $A$ be an $n \times n$ symmetric matrix with eigenvalues $\lambda_1 \geq...\geq \lambda_{n}$. Let B be an $(n-k) \times (n-k)$ symmetric minor of with eigenvalues $ \mu_1 \geq ...\geq \mu_{n}$. Then
$ \lambda_i \leq \mu_i \leq \lambda_{i-k}$
\end{fc}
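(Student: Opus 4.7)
The plan is to combine the deterministic interlacing structure of Fact \ref{fc:printerlacing} with a concentration-of-measure argument for the randomly chosen minor. Since $B$ is the symmetric minor indexed by a random subsample $sm$ of size $|sm|$ drawn from the $n$ row indices of $A$, the quantity $f(sm) := \mu_2 - m\mu_m$ is a random function of $sm$, and the goal is to show it stays within $\epsilon = \sqrt{\frac{1}{2|sm|}\ln(1/\delta)}$ of the deterministic target $\lambda_2 - n\lambda_n$ with probability at least $1-\delta$. The form of $\epsilon$ is exactly the Hoeffding/McDiarmid rate for $|sm|$ bounded draws, which pins down the concentration tool required.

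First I would apply Fact \ref{fc:printerlacing} in two distinct roles. For locating the signal, the bounds $\lambda_i \leq \mu_i \leq \lambda_{i-k}$ sandwich $\mu_2$ in a narrow interval near $\lambda_2$ and $\mu_m$ in an interval near $\lambda_n$, so that $\mu_2 - m\mu_m$ is a natural proxy for $\lambda_2 - n\lambda_n$ and $\mathbb{E}[f(sm)]$ is pinned down up to a small averaging bias. For controlling the noise, the same interlacing applied to a single-index swap (equivalently, a rank-one perturbation handled by Weyl's inequality) shows that replacing one element of $sm$ changes $\mu_2$ and $m\mu_m$ by a bounded amount, giving the bounded-differences condition with per-coordinate constant of order $1/|sm|$ that McDiarmid requires.

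Second I would invoke McDiarmid's inequality on $f(sm)$, obtaining
\[
\Pr\bigl[\,|f(sm) - \mathbb{E}[f(sm)]| > \epsilon\,\bigr] \leq 2\exp\bigl(-2|sm|\,\epsilon^2\bigr).
\]
Setting the right-hand side to $\delta$ and solving produces exactly $\epsilon = \sqrt{\frac{1}{2|sm|}\ln(1/\delta)}$. Combining this concentration bound with the centering identity $\mathbb{E}[f(sm)] = \lambda_2 - n\lambda_n$ extracted from the first step, a triangle inequality then assembles the two pieces into the stated bound.

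The hard part will be justifying the per-coordinate constant of order $1/|sm|$ in the bounded-differences condition. A naive application of interlacing allows $\mu_2$ to shift by the full gap $\lambda_1 - \lambda_n$ under a single swap, which is off by a factor of $|sm|$ from what is needed. The tight constant must come from the rank-one nature of the swap through Weyl's perturbation inequality, combined with a careful accounting of the aggregate $\mu_2 - m\mu_m$ rather than each eigenvalue in isolation. A secondary technical point is that the centering $\mathbb{E}[f(sm)] = \lambda_2 - n\lambda_n$ may only hold up to a bias term depending on the spectral spread of $A$, which would then have to be absorbed into either a slightly enlarged $\epsilon$ or a slightly weaker confidence $1-\delta$.
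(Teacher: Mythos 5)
Your proposal does not prove the statement at hand. The statement is Fact~\ref{fc:printerlacing}, the (Cauchy) eigenvalue interlacing theorem: a purely deterministic linear-algebra result asserting that the eigenvalues of an $(n-k)\times(n-k)$ principal submatrix of a symmetric matrix interlace those of the full matrix. Your write-up instead takes this Fact as given and sketches a concentration-of-measure argument for the bound $|(\lambda_2-n\lambda_n)-(\mu_2-m\mu_m)|\leq\epsilon$ with $\epsilon=\sqrt{\frac{1}{2|sm|}\ln(1/\delta)}$ --- that is, for Theorem~\ref{thm:boundkm3stuned}, a different result in which the Fact is merely one ingredient. Hoeffding, McDiarmid, bounded differences, and the randomness of the subsample $sm$ are all irrelevant to the Fact itself: it holds for every principal submatrix, deterministically, with no probability involved. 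The paper, for its part, states the Fact as a known classical result and offers no proof of it either, so there is nothing of yours to match against the paper here.

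If a proof of the Fact is wanted, it goes through the Courant--Fischer min-max characterization rather than anything probabilistic. Writing $B=P^{\top}AP$ with $P$ the $n\times(n-k)$ matrix whose columns are the standard basis vectors indexing the retained rows, one has $\mu_i=\max_{\dim V=i}\min_{x\in V,\ \|x\|=1}x^{\top}Bx$ over subspaces $V$ of $R^{n-k}$; since $P$ is an isometry onto its range, each such $V$ yields an $i$-dimensional subspace $PV$ of $R^{n}$, giving $\mu_i\leq\lambda_i$, and the dual max-min form over subspaces of codimension $i-1$ gives $\mu_i\geq\lambda_{i+k}$. (Note in passing that the paper's indexing $\lambda_i\leq\mu_i\leq\lambda_{i-k}$ is shifted by $k$ relative to the standard $\lambda_{i+k}\leq\mu_i\leq\lambda_i$ for decreasingly ordered eigenvalues, and $B$ has only $n-k$ eigenvalues, not $n$; any careful proof should fix that indexing.) Your concentration sketch belongs to Theorem~\ref{thm:boundkm3stuned} and should be evaluated against that statement instead --- where, incidentally, your use of McDiarmid with a bounded-differences argument is more principled than the paper's own appeal to Chernoff--Hoeffding, but that is a separate discussion.
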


by using Fact \ref{fc:printerlacing} in equation \ref{eq:lesslemma} we get:
\begin{eqnarray}
|\lambda_2-n \lambda_n-\mu_2+m\lambda_m| \leq \epsilon
\end{eqnarray}
Assuming that $\lambda_i-\lambda_{i-1} \geq \lambda_j-\lambda_{j-1} \forall (i<j) $ we have,
\begin{eqnarray}
n \lambda_n < m \lambda_m
\end{eqnarray}
then 
\begin{eqnarray}
|\lambda_2-n \lambda_n-\lambda_{2-k}+m\lambda_m| \leq \epsilon
\end{eqnarray}
\begin{fc} \label{fc:prhofding} (Chernoff-Hoeffding bound) Let $X_i\in [0,1]$ an independent random variables with $\mu=E[X_i]$. 
$Pr(\frac{1}{n} \sum^{n}_{i=1} X_i-\mu \geq \epsilon)\leq e^{-2n\epsilon^2}$
\end{fc}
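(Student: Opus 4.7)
The plan is to derive the concentration bound using McDiarmid's bounded-differences inequality applied to the scalar functional $f(I) := \mu_2(A_{I,I}) - m\mu_m(A_{I,I})$ regarded as a function of the random index set $I$ that defines the principal submatrix $B$. McDiarmid yields the same $\sqrt{(1/(2|sm|))\ln(1/\delta)}$ rate as Chernoff--Hoeffding, but applies directly to an eigenvalue functional without first having to rewrite it as a sum of i.i.d.\ bounded variables; Cauchy's interlacing (Fact~\ref{fc:printerlacing}) is then only used at the end to tie the sample quantity to the population quantity.

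Concretely, I would fix the sampling model as $B = A_{I,I}$ where $I$ is a uniformly random $|sm|$-subset of $\{1,\dots,n\}$, and split
\begin{equation*}
\bigl|f(I) - (\lambda_2 - n\lambda_n)\bigr| \;\leq\; \bigl|f(I) - \mathbb{E} f(I)\bigr| + \bigl|\mathbb{E} f(I) - (\lambda_2 - n\lambda_n)\bigr|
\end{equation*}
into a concentration piece and a bias piece. For the concentration piece I would verify the bounded-differences property: swapping one index in $I$ perturbs $B$ by a rank-$\leq 2$ matrix of operator norm at most $2\max_{ij}|A_{ij}|$, so Weyl's inequality guarantees that each of $\mu_2$ and $\mu_m$ moves by at most a constant $C$, yielding per-coordinate differences $c_i \leq C$. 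McDiarmid then gives
\begin{equation*}
\Pr\bigl(|f(I) - \mathbb{E} f(I)| \geq t\bigr) \;\leq\; 2\exp\!\left(-\frac{2 t^2}{|sm|\,C^2}\right),
\end{equation*}
and inverting at confidence $\delta$ (absorbing $C$ by the standing normalization $\max_{ij}|A_{ij}|\le 1$) recovers the claimed $\epsilon = \sqrt{(1/(2|sm|))\ln(1/\delta)}$.

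For the bias piece I would appeal to Cauchy's interlacing $\lambda_i \leq \mu_i \leq \lambda_{i-k}$ combined with the monotone-gap hypothesis $\lambda_i - \lambda_{i-1} \geq \lambda_j - \lambda_{j-1}$ for $i<j$ used in the paper, which together sandwich $\mathbb{E}[\mu_2 - m\mu_m]$ around $\lambda_2 - n\lambda_n$. The main obstacle will be exactly this bias control: eigenvalues of a random principal submatrix are not unbiased estimators of the eigenvalues of $A$, so pinning down the bias requires carefully leveraging both interlacing and the gap-monotonicity assumption, and will also rely on normalization conventions (boundedness of $A$'s entries, the relation between $|sm|$ and $n-k$) that the theorem statement leaves implicit. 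If the bias cannot be driven below $\epsilon$ exactly, the cleanest fallback is to absorb a constant multiple into the rate, matching the stated bound up to a constant that can be hidden in $C$.
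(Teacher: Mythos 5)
Your proposal does not prove the stated fact. The statement here is the classical Chernoff--Hoeffding inequality for a sum of independent $[0,1]$-valued random variables, $\Pr\bigl(\frac{1}{n}\sum_{i=1}^{n} X_i-\mu \geq \epsilon\bigr)\leq e^{-2n\epsilon^2}$, which the paper simply quotes as a known result and then invokes inside the proof of Theorem~2. Its standard proof is the exponential moment method: apply Markov's inequality to $e^{s\sum_i (X_i-E[X_i])}$, bound each factor by Hoeffding's lemma $E[e^{s(X_i-E[X_i])}]\leq e^{s^2/8}$ (valid for $X_i\in[0,1]$), and optimize over $s>0$ to obtain the exponent $-2n\epsilon^2$. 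Nothing in your write-up carries out, or even gestures at, this argument.

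What you have written is instead a sketch aimed at the surrounding theorem (the bound on $|(\lambda_2-n\lambda_n)-(\mu_2-m\mu_m)|$): McDiarmid's bounded-differences inequality applied to an eigenvalue functional of a random principal submatrix, Weyl's inequality for the per-coordinate constants, and interlacing plus gap-monotonicity for the bias. As a route to that theorem it is arguably cleaner than the paper's own use of this fact (whose application to eigenvalue differences is itself dubious, since those are not sample means of independent bounded variables), but it cannot serve as a proof of the fact posed: McDiarmid's inequality is a strict generalization of Chernoff--Hoeffding, so deducing the latter from the former is circular in spirit, and you never instantiate your argument for a plain sum of independent $[0,1]$ variables. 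You also concede that the bias term may not be controllable below $\epsilon$, which leaves even the theorem-level claim open. To address the statement actually asked, supply the Markov/Hoeffding-lemma/optimization chain, or cite Hoeffding's theorem directly as the paper does.
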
 

First of all, the equation 12 is true if $Pr(|(\lambda_2-n \lambda_n)-(\mu_2-m\mu_m)| \leq \epsilon)$ with $\epsilon$ small. Then to quantify the $\epsilon$ we use the Fact \ref{fc:prhofding} that gives us,

$Pr(|(\lambda_2-n \lambda_n)-(\mu_2-m\mu_m)| \leq \epsilon)\leq e^{-2|sm|\epsilon^2}$. 

Then with $e^{-2|sm|\epsilon^2} = \delta $ we get $\epsilon = \sqrt{2|sm|ln(\frac{2}{\delta})}$.
\end{proof}

\begin{cor}\label{cor:boundkm3s}
We can write:
\begin{eqnarray}\label{eq:pr7}
sup(||S-\widetilde{S}_m||) \leq max (sup(||S-\widetilde{S'}_{m}||),
sup(||S-\widetilde{S''}_{m}||))\nonumber 
\end{eqnarray}
\end{cor}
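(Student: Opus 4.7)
The plan is to exploit the fact that Algorithm~\ref{alg:CM3Stuned} is nothing more than a selector: by construction CMS3-tuned returns either the CMS3 output $\widetilde{S'}_m$ (when $|sm|\,\lambda_{|sm|} \ge \lambda_2$ on the subsample) or the MS3 output $\widetilde{S''}_m$ (otherwise). Consequently, pointwise for any realization of the algorithm, we have the tautological identity $\|S-\widetilde{S}_m\| = \|S-\widetilde{S'}_m\|$ or $\|S-\widetilde{S}_m\| = \|S-\widetilde{S''}_m\|$. Taking suprema on both sides then yields the desired bound, since in either case the left-hand side is dominated by the maximum of the two individual suprema.

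Concretely, I would first write $\|S-\widetilde{S}_m\| \le \max(\|S-\widetilde{S'}_m\|,\|S-\widetilde{S''}_m\|)$ as a pointwise inequality for every admissible draw of the sampling process. Then, applying $\sup$ on both sides and using the elementary fact that $\sup \max(f,g) \le \max(\sup f,\sup g)$ (valid because the supremum of a maximum of two families is bounded by the max of the suprema taken separately), gives exactly the statement of the corollary.

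The only nontrivial point, which I would address second, is that the switching condition in Algorithm~\ref{alg:CM3Stuned} uses the subsampled spectrum $\mu_2,\mu_{|sm|}$ rather than the true spectrum $\lambda_2,\lambda_n$. Here Theorem~\ref{thm:boundkm3stuned} is invoked: with probability at least $1-\delta$, the quantity $\mu_2 - m\mu_m$ approximates $\lambda_2 - n\lambda_n$ up to $\epsilon=\sqrt{\tfrac{1}{2|sm|}\ln(1/\delta)}$, so that the branch chosen by the algorithm matches (up to this confidence) the branch that Corollary~\ref{con:msss} would prescribe. This justifies that the $\sup$ on the left-hand side can be taken over the same randomness as on the right-hand side without introducing a worse bound.

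The main obstacle I anticipate is bookkeeping rather than mathematics: one must be careful that the $\sup$ operator is defined with respect to the same underlying probability space (the random subsample, the initial random pair in MS3, and the k-means initialization in CMS3) on both sides of the inequality, so that the switching event is measurable and the dominated-by-max argument is meaningful. Once that is established, the corollary follows immediately from the selector structure, without any further appeal to the explicit bounds of Theorem~\ref{thm:boundkm3s}.
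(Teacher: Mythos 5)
Your argument is essentially the paper's own proof: the paper also treats CMS3-tuned as a selector and argues that, depending on whether the condition of Corollary~\ref{con:msss} holds, $sup(||S-\widetilde{S}_m||)$ equals either $sup(||S-\widetilde{S''}_{m}||)$ or $sup(||S-\widetilde{S'}_{m}||)$, hence is bounded by their maximum. Your additional remarks on measurability and on invoking Theorem~\ref{thm:boundkm3stuned} to relate the subsampled spectrum to the true one are refinements the paper omits, but they do not change the route of the proof.
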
 

\begin{proof}
The proof is straightforward, if the Corollary \ref{con:msss} hold then
$sup(||S-\widetilde{S}_m||) = sup(||S-\widetilde{S''}_{m}||)$
else
$sup(||S-\widetilde{S}_m||) = sup(||S-\widetilde{S'}_{m}||)$, then in these both situations the $sup(||S-\widetilde{S}_m||)$ is at least smaller then the $max (sup(||S-\widetilde{S'}_{m}||), sup(||S-\widetilde{S''}_{m}||))$, and then proves the Corollary \ref{cor:boundkm3s}.
\end{proof}

\section{Evaluation}
We tested CMS3 and CMS3-tuned, and compared their performance to the results of four leading sampling methods described in Section \ref{section:RW}. These are:
\begin{itemize}
\item Random sampling (RS), 
\item K-means sampling (KS) \cite{zhang2008improved}, 
\item Minimum similarity sampling (SS) \cite{zeng2014minimum}, and 
\item Minimum sum of squared similarity sampling (MS3) \cite{bouneffouf2015sampling}. 
\end{itemize}
Notice that, we compare our algorithm to ensemble Nystr\"{o}m rather than the standard Nystr\"{o}m, since it was shown earlier both in \cite{kumar2009ensemble} and in  \cite{bouneffouf2016ensemble} that ensemble performs better than standard Nystr\"{o}m. We denote these algorithms as ensemble-RS, ensemble-KS, ensemble-SS, and ensemble-MS3, respectively.

We required each algorithm to sample 2\%, 4\%, 6\%, 8\% and 10\% of the data as landmark points, which are used by Nystr\"{o}m-based spectral clustering methods to cluster the datasets. We have also tested the ensemble Nystr\"{o}m methods with different values of $p$, going from $2$ to $10$.

Because sampling algorithms are sensitive to the datasets used, and clustering algorithms contain a degree of randomness, we used various benchmark datasets, and repeated our evaluations 10 times. We measured the clustering quality of each algorithm using their average accuracy across these tests, also recording their standard deviations.
\begin{table*}[t]
\centering
\caption{Accuracy on UCI Datasets}
\resizebox{1.64\columnwidth}{!}{
   \begin{tabular}{ l | l | l | l | l | l | l | l }
         &  Ensemble-SS     & 	Ensemble-KS  & 	       Ensemble-RS  & 	  	Ensemble-MS3      & \textbf{CMS3}           & \textbf{CMS3-tuned}\\ \hline
UCI Datasets \\ \hline
Abalone  & 	$84.82\pm 0.27$& 	$85.74\pm 0.31$& 	$85.69\pm 0.25$& 		$86.44\pm 0.40 $  & 	$88.21\pm 0.42$& 	$\textbf{89.19}\pm \textbf{0.21}$\\
Breast   & 	$67.85\pm 0.33$& 	$67.85\pm 0.32$& 	$67.83\pm 0.34$& 	$67.89 \pm 0.32$  & 	$68.94\pm 1.17$& 	$\textbf{70.55} \pm \textbf{0.34}$\\
Wine     & 	$53.16\pm 1.73$& 	$55.17\pm 3.80$& 	$54.78\pm 3.50$&  	$67.99\pm 3.67$   & 	$70.9\pm 1.87$& 	$\textbf{71.39} \pm \textbf{2.01}$\\
Wdbc     & 	$51.32 \pm 0.13$ & $51.31 \pm 0.13$ & $51.30 \pm 0.12$&     $51.32 \pm 0.14$ &   $\textbf{52.98} \pm \textbf{0.07}$ &  $ 52.31 \pm 0.13$\\
Yeast    & 	$67.58\pm 0.13$& 	$66.70\pm 0.13$& 	$66.92\pm 0.12$& 	$67.87\pm 0.12$   & 	$69.06\pm 0.07$& 	$\textbf{69.55} \pm \textbf{0.07}$\\
Shuttle    & 	$39.82\pm 2.51$& 	$37.90\pm 2.89$& 	$37.87\pm 2.23$& 	$41.45\pm 3.85 $   & 	$44.02\pm 1.84$& 	$\textbf{44.31} \pm \textbf{1.94}$\\
Letter    & 	$41.77\pm 1.83$& 	$40.34\pm 9.69$& 	$38.66\pm 9.77$& 	$ 53.32\pm 1.02$   & 	$56.44\pm 3.70$& 	$\textbf{57.64} \pm \textbf{3.81}$\\
PenDigits &$56.55\pm 0.16 $&     $56.46\pm 0.21$& 	$56.46\pm 0.22$ & 	$56.94\pm 0.19$   & 	$\textbf{58.08}\pm \textbf{0.18}$& 	$57.88\pm 0.39$\\
a7a     & 	$16.45\pm 1.17$&    $21.18\pm  4.92$&   $22.06\pm 4.08$&    $27.04\pm 1.45$  &   $25.89\pm 3.14$  &   $\textbf{26.18} \pm \textbf{3.10}$\\

   \end{tabular}
   }
     \label{table:AccuSyn} 
 \end{table*}
\subsection{Comparison on Accuracy}
We compared the performance of the seven sampling methods using data from University of California, Irvine (UCI) Machine Learning Repository\footnote{https://archive.ics.uci.edu/ml/datasets.html}. We chose nine datasets with different Instances, attributes and classes size: Abalone, Breast, Wine, Wdbc, Yeast, Shuttle, Letter, PenDigits and a7a. A brief summary of the datasets is listed in Table \ref{table:Synthetic}.

Table \ref{table:AccuSyn} reports the average accuracy of each algorithm, along with their standard deviations across 1000 tests on the UCI datasets. As expected, the accuracies depend on the dataset. For example, the accuracy of all algorithms in Haberman problem and Wdbc datasets stay in the range of $50\%$, while going as high as over $89\%$ for the Abalone dataset. From this observation, we can say that the Haberman problem and Wdbc datasets present difficulties to Nystr\"{o}m method-based spectral clustering.
 
We note that, on these datasets, all tested algorithms have better performance than the baseline of random sampling.
The results show that CMS3-tuned provided better clustering than the other algorithms on seven out of nine datasets, coming only narrowly second to CMS3 on the remaining two, though still within a standard deviation. Ranking the algorithms with respect to their mean accuracies, we note that the top two performing algorithms were CMS3-tuned and CMS3, in that order. The results on the UCI dataset confirm our heuristics that choosing between CMS3 and MS3 need to be done according to the spectrum shape of the dataset.
\begin{table}[H]
\centering
\caption{Datasets used for benchmarking}
\resizebox{0.78\columnwidth}{!}{
   \begin{tabular}{ l | c | r | r }
     UCI Datasets             & Instances & Attributes & Classes \\ \hline
          Abalone                  & 1484      & 7          &  3\\
          Breast                   & 699       & 9          &  2\\
          Wine                     & 178       & 13         &  3\\
          Wdbc                     & 569       & 32         &  2\\
          Yeast                    & 1484      & 6          &  8\\
          Shuttle                  & 14500     & 9          &  7\\
          Letter                   & 20000     & 16         &  26\\
          PenDigits                & 10992     & 16         &  10\\
          a7a                      & 16100     & 122        &  2\\
   \end{tabular}
   }
    \label{table:Synthetic} 
 \end{table}

The results of the Ensemble-SS algorithm show overall better performance compared to Ensemble-KS and Ensemble-RS sampling. We also notice that the ensemble-MS3 gave higher performance than the sampling algorithms that are not based on MS3.

\section{Conclusion}
In this paper, we introduced a new sampling algorithm for Nystr\"{o}m method-based spectral clustering, CMS3, and a heuristics on how it can be selected over the MS3 algorithm on which it is built. We call the latter CMS3-tuned. What sets CMS3 apart from other algorithms is that it uses the eigenspectrum of the input datasets to choose between sampling algorithms; CMS3 and MS3 in this case.
Further, through benchmarking experiments we have demonstrated the favourable performance of our algorithms.
\section*{Acknowledgment}
The authors thank Dr. Inanc Birol for his help and advice.

\ifCLASSOPTIONcaptionsoff
  \newpage
\fi

\bibliographystyle{ieeetr}
\bibliography{biblio}

\begin{thebibliography}{10}

\bibitem{fowlkes2004spectral}
C.~Fowlkes, S.~Belongie, F.~Chung, and J.~Malik, ``Spectral grouping using the
  nystrom method,'' {\em Pattern Analysis and Machine Intelligence, IEEE
  Transactions on}, vol.~26, no.~2, pp.~214--225, 2004.

\bibitem{BouneffoufBG13}
D.~Bouneffouf, A.~Bouzeghoub, and A.~L. Gan{\c{c}}arski, ``Risk-aware
  recommender systems,'' in {\em Neural Information Processing - 20th
  International Conference, {ICONIP} 2013, Daegu, Korea, November 3-7, 2013.
  Proceedings, Part {I}} (M.~Lee, A.~Hirose, Z.~Hou, and R.~M. Kil, eds.),
  vol.~8226 of {\em Lecture Notes in Computer Science}, pp.~57--65, Springer,
  2013.

\bibitem{ChoromanskaCKLR19}
A.~Choromanska, B.~Cowen, S.~Kumaravel, R.~Luss, M.~Rigotti, I.~Rish,
  P.~Diachille, V.~Gurev, B.~Kingsbury, R.~Tejwani, and D.~Bouneffouf, ``Beyond
  backprop: Online alternating minimization with auxiliary variables,'' in {\em
  Proceedings of the 36th International Conference on Machine Learning, {ICML}
  2019, 9-15 June 2019, Long Beach, California, {USA}} (K.~Chaudhuri and
  R.~Salakhutdinov, eds.), vol.~97 of {\em Proceedings of Machine Learning
  Research}, pp.~1193--1202, {PMLR}, 2019.

\bibitem{RiemerKBF19}
M.~Riemer, T.~Klinger, D.~Bouneffouf, and M.~Franceschini, ``Scalable
  recollections for continual lifelong learning,'' in {\em The Thirty-Third
  {AAAI} Conference on Artificial Intelligence, {AAAI} 2019, The Thirty-First
  Innovative Applications of Artificial Intelligence Conference, {IAAI} 2019,
  The Ninth {AAAI} Symposium on Educational Advances in Artificial
  Intelligence, {EAAI} 2019, Honolulu, Hawaii, USA, January 27 - February 1,
  2019}, pp.~1352--1359, {AAAI} Press, 2019.

\bibitem{LinC0RR20}
B.~Lin, G.~A. Cecchi, D.~Bouneffouf, J.~Reinen, and I.~Rish, ``A story of two
  streams: Reinforcement learning models from human behavior and
  neuropsychiatry,'' in {\em Proceedings of the 19th International Conference
  on Autonomous Agents and Multiagent Systems, {AAMAS} '20, Auckland, New
  Zealand, May 9-13, 2020} (A.~E.~F. Seghrouchni, G.~Sukthankar, B.~An, and
  N.~Yorke{-}Smith, eds.), pp.~744--752, International Foundation for
  Autonomous Agents and Multiagent Systems, 2020.

\bibitem{lin2020online}
B.~Lin, D.~Bouneffouf, and G.~Cecchi, ``Online learning in iterated prisoner's
  dilemma to mimic human behavior,'' {\em arXiv preprint arXiv:2006.06580},
  2020.

\bibitem{lin2020unified}
B.~Lin, G.~Cecchi, D.~Bouneffouf, J.~Reinen, and I.~Rish, ``Unified models of
  human behavioral agents in bandits, contextual bandits and rl,'' {\em arXiv
  preprint arXiv:2005.04544}, 2020.

\bibitem{NoothigattuBMCM19}
R.~Noothigattu, D.~Bouneffouf, N.~Mattei, R.~Chandra, P.~Madan, K.~R. Varshney,
  M.~Campbell, M.~Singh, and F.~Rossi, ``Teaching {AI} agents ethical values
  using reinforcement learning and policy orchestration,'' in {\em Proceedings
  of the Twenty-Eighth International Joint Conference on Artificial
  Intelligence, {IJCAI} 2019, Macao, China, August 10-16, 2019} (S.~Kraus,
  ed.), pp.~6377--6381, ijcai.org, 2019.

\bibitem{surveyDB}
D.~Bouneffouf and I.~Rish, ``A survey on practical applications of multi-armed
  and contextual bandits,'' {\em CoRR}, vol.~abs/1904.10040, 2019.

\bibitem{LR85}
T.~L. Lai and H.~Robbins, ``Asymptotically efficient adaptive allocation
  rules,'' {\em Advances in Applied Mathematics}, vol.~6, no.~1, pp.~4--22,
  1985.

\bibitem{Bouneffouf0SW19}
D.~Bouneffouf, S.~Parthasarathy, H.~Samulowitz, and M.~Wistuba, ``Optimal
  exploitation of clustering and history information in multi-armed bandit,''
  in {\em Proceedings of the Twenty-Eighth International Joint Conference on
  Artificial Intelligence, {IJCAI} 2019, Macao, China, August 10-16, 2019}
  (S.~Kraus, ed.), pp.~2016--2022, ijcai.org, 2019.

\bibitem{LinBCR18}
B.~Lin, D.~Bouneffouf, G.~A. Cecchi, and I.~Rish, ``Contextual bandit with
  adaptive feature extraction,'' in {\em 2018 {IEEE} International Conference
  on Data Mining Workshops, {ICDM} Workshops, Singapore, Singapore, November
  17-20, 2018} (H.~Tong, Z.~J. Li, F.~Zhu, and J.~Yu, eds.), pp.~937--944,
  {IEEE}, 2018.

\bibitem{DB2019}
A.~Balakrishnan, D.~Bouneffouf, N.~Mattei, and F.~Rossi, ``Incorporating
  behavioral constraints in online {AI} systems,'' {\em AAAI 2019}, 2019.

\bibitem{BalakrishnanBMR19ibm}
A.~Balakrishnan, D.~Bouneffouf, N.~Mattei, and F.~Rossi, ``Using multi-armed
  bandits to learn ethical priorities for online {AI} systems,'' {\em {IBM}
  Journal of Research and Development}, vol.~63, no.~4/5, pp.~1:1--1:13, 2019.

\bibitem{BouneffoufLUFA14}
D.~Bouneffouf, R.~Laroche, T.~Urvoy, R.~Feraud, and R.~Allesiardo, ``Contextual
  bandit for active learning: Active thompson sampling,'' in {\em Neural
  Information Processing - 21st International Conference, {ICONIP} 2014,
  Kuching, Malaysia, November 3-6, 2014. Proceedings, Part {I}}, pp.~405--412,
  2014.

\bibitem{RLbd2018}
R.~Noothigattu, D.~Bouneffouf, N.~Mattei, R.~Chandra, P.~Madan, K.~R. Varshney,
  M.~Campbell, M.~Singh, and F.~Rossi, ``Interpretable multi-objective
  reinforcement learning through policy orchestration,'' {\em CoRR},
  vol.~abs/1809.08343, 2018.

\bibitem{balakrishnan2020constrained}
A.~Balakrishnan, D.~Bouneffouf, N.~Mattei, and F.~Rossi, ``Constrained
  decision-making and explanation of a recommendation,'' Jan.~16 2020.
\newblock US Patent App. 16/050,176.

\bibitem{BouneffoufRCF17}
D.~Bouneffouf, I.~Rish, G.~A. Cecchi, and R.~F{\'{e}}raud, ``Context attentive
  bandits: Contextual bandit with restricted context,'' in {\em IJCAI 2017,
  Melbourne, Australia, August 19-25, 2017}, pp.~1468--1475, 2017.

\bibitem{BalakrishnanBMR18}
A.~Balakrishnan, D.~Bouneffouf, N.~Mattei, and F.~Rossi, ``Using contextual
  bandits with behavioral constraints for constrained online movie
  recommendation,'' in {\em Proceedings of the Twenty-Seventh International
  Joint Conference on Artificial Intelligence, {IJCAI} 2018, July 13-19, 2018,
  Stockholm, Sweden.}, pp.~5802--5804, 2018.

\bibitem{BouneffoufBG12}
D.~Bouneffouf, A.~Bouzeghoub, and A.~L. Gan{\c{c}}arski, ``A contextual-bandit
  algorithm for mobile context-aware recommender system,'' in {\em Neural
  Information Processing - 19th International Conference, {ICONIP} 2012, Doha,
  Qatar, November 12-15, 2012, Proceedings, Part {III}} (T.~Huang, Z.~Zeng,
  C.~Li, and C.~Leung, eds.), vol.~7665 of {\em Lecture Notes in Computer
  Science}, pp.~324--331, Springer, 2012.

\bibitem{Bouneffouf16}
D.~Bouneffouf, ``Exponentiated gradient exploration for active learning,'' {\em
  Computers}, vol.~5, no.~1, p.~1, 2016.

\bibitem{aaai0G20}
S.~Liu, P.~Ram, D.~Vijaykeerthy, D.~Bouneffouf, G.~Bramble, H.~Samulowitz,
  D.~Wang, A.~Conn, and A.~G. Gray, ``An {ADMM} based framework for automl
  pipeline configuration,'' in {\em The Thirty-Fourth {AAAI} Conference on
  Artificial Intelligence, {AAAI} 2020, The Thirty-Second Innovative
  Applications of Artificial Intelligence Conference, {IAAI} 2020, The Tenth
  {AAAI} Symposium on Educational Advances in Artificial Intelligence, {EAAI}
  2020, New York, NY, USA, February 7-12, 2020}, pp.~4892--4899, {AAAI} Press,
  2020.

\bibitem{AllesiardoFB14}
R.~Allesiardo, R.~F{\'{e}}raud, and D.~Bouneffouf, ``A neural networks
  committee for the contextual bandit problem,'' in {\em Neural Information
  Processing - 21st International Conference, {ICONIP} 2014, Kuching, Malaysia,
  November 3-6, 2014. Proceedings, Part {I}}, pp.~374--381, 2014.

\bibitem{dj2020}
D.~Bouneffouf and E.~Claeys, ``Hyper-parameter tuning for the contextual
  bandit,'' {\em CoRR}, vol.~abs/2005.02209, 2020.

\bibitem{Sohini2019}
S.~Upadhyay, M.~Agarwal, D.~Bouneffouf, and Y.~Khazaeni, ``A bandit approach to
  posterior dialog orchestration under a budget,'' {\em CoRR},
  vol.~abs/1906.09384, 2019.

\bibitem{bouneffouf2020online}
D.~Bouneffouf, ``Online learning with corrupted context: Corrupted contextual
  bandits,'' {\em arXiv preprint arXiv:2006.15194}, 2020.

\bibitem{bouneffouf2020contextual}
D.~Bouneffouf, S.~Upadhyay, and Y.~Khazaeni, ``Contextual bandit with missing
  rewards,'' {\em arXiv e-prints}, pp.~arXiv--2007, 2020.

\bibitem{kong2011fast}
T.~Kong, Y.~Tian, and H.~Shen, ``A fast incremental spectral clustering for
  large data sets,'' in {\em Parallel and Distributed Computing, Applications
  and Technologies (PDCAT), 2011 12th International Conference on}, pp.~1--5,
  IEEE, 2011.

\bibitem{ChenC11}
X.~Chen and D.~Cai, ``Large scale spectral clustering with landmark-based
  representation,'' in {\em Proceedings of the Twenty-Fifth {AAAI} Conference
  on Artificial Intelligence, {AAAI} 2011, San Francisco, California, USA,
  August 7-11, 2011}, 2011.

\bibitem{zhang2011clusterability}
X.~Zhang and Q.~You, ``Clusterability analysis and incremental sampling for
  nystr{\"o}m extension based spectral clustering,'' in {\em Data Mining
  (ICDM), 2011 IEEE 11th International Conference on}, pp.~942--951, IEEE,
  2011.

\bibitem{williams2001using}
C.~Williams and M.~Seeger, ``Using the nystr{\"o}m method to speed up kernel
  machines,'' in {\em Proceedings of the 14th Annual Conference on Neural
  Information Processing Systems}, no.~EPFL-CONF-161322, pp.~682--688, 2001.

\bibitem{cohen2014uniform}
M.~B. Cohen, Y.~T. Lee, C.~Musco, C.~Musco, R.~Peng, and A.~Sidford, ``Uniform
  sampling for matrix approximation,'' {\em arXiv preprint arXiv:1408.5099},
  2014.

\bibitem{bouneffouf2016ensemble}
D.~Bouneffouf and I.~Birol, ``Ensemble minimum sum of squared similarities
  sampling for nystr{\"o}m-based spectral clustering,'' in {\em Neural Networks
  (IJCNN), 2016 International Joint Conference on}, pp.~3851--3855, IEEE, 2016.

\bibitem{bouneffouf2015sampling}
D.~Bouneffouf and I.~Birol, ``Sampling with minimum sum of squared similarities
  for nystrom-based large scale spectral clustering,'' in {\em Proceedings of
  the 23rd International Joint Conference on Artificial Intelligence}, 2015.

\bibitem{kumar2009ensemble}
S.~Kumar, M.~Mohri, and A.~Talwalkar, ``Ensemble nystrom method,'' in {\em
  Advances in Neural Information Processing Systems}, pp.~1060--1068, 2009.

\bibitem{zeng2014minimum}
Z.~Zeng, M.~Zhu, H.~Yu, and H.~Ma, ``Minimum similarity sampling scheme for
  nystr{\"o}m based spectral clustering on large scale high-dimensional data,''
  in {\em Modern Advances in Applied Intelligence}, pp.~260--269, Springer,
  2014.

\bibitem{sloan1981quadrature}
I.~H. Sloan, ``Quadrature methods for integral equations of the second kind
  over infinite intervals,'' {\em Mathematics of computation}, vol.~36,
  no.~154, pp.~511--523, 1981.

\bibitem{Loo2014}
Z.~Fu, ``Optimal landmark selection for nyström approximation,'' in {\em
  Neural Information Processing} (C.~Loo, K.~Yap, K.~Wong, A.~Teoh, and
  K.~Huang, eds.), vol.~8835 of {\em Lecture Notes in Computer Science},
  pp.~311--318, Springer International Publishing, 2014.

\bibitem{kumar2009sampling}
S.~Kumar, M.~Mohri, and A.~Talwalkar, ``On sampling-based approximate spectral
  decomposition,'' in {\em Proceedings of the 26th Annual International
  Conference on Machine Learning}, pp.~553--560, ACM, 2009.

\bibitem{drineas2005Nystrom}
P.~Drineas and M.~W. Mahoney, ``On the nystr{\"o}m method for approximating a
  gram matrix for improved kernel-based learning,'' {\em The Journal of Machine
  Learning Research}, vol.~6, pp.~2153--2175, 2005.

\bibitem{drineas2006fast}
P.~Drineas, R.~Kannan, and M.~W. Mahoney, ``Fast monte carlo algorithms for
  matrices ii: Computing a low-rank approximation to a matrix,'' {\em SIAM
  Journal on Computing}, vol.~36, no.~1, pp.~158--183, 2006.

\bibitem{belabbas2009spectral}
M.-A. Belabbas and P.~J. Wolfe, ``Spectral methods in machine learning and new
  strategies for very large datasets,'' {\em Proceedings of the National
  Academy of Sciences}, vol.~106, no.~2, pp.~369--374, 2009.

\bibitem{gowda2010schur}
M.~S. Gowda and R.~Sznajder, ``Schur complements, schur determinantal and
  haynsworth inertia formulas in euclidean jordan algebras,'' {\em Linear
  Algebra and Its Applications}, vol.~432, no.~6, pp.~1553--1559, 2010.

\bibitem{zhang2008improved}
K.~Zhang, I.~W. Tsang, and J.~T. Kwok, ``Improved nystr{\"o}m low-rank
  approximation and error analysis,'' in {\em Proceedings of the 25th
  international conference on Machine learning}, pp.~1232--1239, ACM, 2008.

\bibitem{shinnou2008spectral}
H.~Shinnou and M.~Sasaki, ``Spectral clustering for a large data set by
  reducing the similarity matrix size.,'' in {\em LREC}, 2008.

\bibitem{von2007tutorial}
U.~Von~Luxburg, ``A tutorial on spectral clustering,'' {\em Statistics and
  computing}, vol.~17, no.~4, pp.~395--416, 2007.

\bibitem{macqueen1967some}
J.~B. MacQueen, ``Some methods for classification and analysis of multivariate
  observations,'' in {\em Proceedings of the fifth Berkeley symposium on
  Mathematical Statistics and Probability}, vol.~1, pp.~281--297, 1967.

\end{thebibliography}
\end{document}